\documentclass{article}
\usepackage{arxiv}

\usepackage[numbers]{natbib}
\usepackage{graphicx}
\usepackage{subcaption}
\usepackage{algorithmic}
\usepackage{algorithm}
\usepackage{threeparttable}
\usepackage{float}
\usepackage{tabularx}
\usepackage{siunitx}
\usepackage{multirow}
\usepackage{booktabs}
\usepackage{amsmath, amsthm, amssymb}
\newtheorem{theorem}{Theorem}

\newtheorem{assumption}{Assumption}

\newcolumntype{Y}{>{\centering\arraybackslash}X}

\begin{document}

\title{DiffSlack: Learning under Nonlinear Inequality Constraints via Learnable Slack Variables}

\author{Ziqian Wang
\And 
Chenxi Fang
\And 
Zhen Zhang
\thanks{Corresponding author: Z. Zhang (e-mail: zzhang@tsinghua.edu.cn). Ziqian Wang and Zhen Zhang are with the State Key Laboratory of Tribology in Advanced Equipment, Tsinghua University, Beijing 100084, China and are with the Beijing Key Laboratory of Transformative High-end Manufacturing Equipment and Technology, Department of Mechanical Engineering, Tsinghua University, Beijing 100084, China. Chenxi Fang is with the Automotive Electronics Business Unit at Hirain Inc., China.}
}
\maketitle

\begin{abstract}
Enforcing nonlinear inequality constraints in neural networks remains challenging, especially when the output is subject to many coupled constraints. Existing hard constraint methods often impose structural restrictions on the constraint set or introduce substantial computational overhead for large-scale nonlinear problems. Here, we propose DiffSlack, a differentiable projection layer for nonlinear inequality-constrained neural prediction. DiffSlack reformulates inequalities as equalities with learnable slack variables, which are predicted as part of the augmented network output and provide a data-driven warm start for damped Gauss-Newton projection. The projection layer maps raw predictions onto the augmented feasible manifold while preserving end-to-end differentiability. A two-stage curriculum further stabilizes training and improves constraint satisfaction. We evaluate DiffSlack on vehicle path planning with 200 nonlinear inequality constraints from collision avoidance, curvature limits, and waypoint spacing. Compared with existing learning-based baselines, DiffSlack achieves a higher planning success rate and stronger geometric constraint satisfaction under a comparable inference budget. Ablation studies further show that the hard projection layer reduces sensitivity to supervision quality. Closed-loop tracking in CARLA and real-world vehicle experiments confirms the executability of the generated trajectories. These results demonstrate that DiffSlack provides a practical and scalable approach to embedding hard inequality constraints into neural networks for engineering applications.
\end{abstract}

\section{Introduction}

Embedding domain knowledge into neural networks has become an important direction for improving the reliability and physical consistency of learned models \citep{GAO2026115749, cui2025knowledge, zhang2024training}. Feasibility constraints are a central form of such knowledge. They encode physical laws, safety limits, and operational requirements that model outputs must satisfy during deployment. However, standard neural networks are primarily trained to minimize prediction losses and do not guarantee constraint satisfaction at inference time. As a result, even accurate predictors may produce infeasible or unsafe outputs when deployed in safety-critical systems. This gap between predictive accuracy and feasibility has motivated growing interest in constrained neural prediction \citep{projectnet2023, van2023constraint, kim2022projectionaware}.

Existing approaches can be broadly grouped into three categories, as illustrated in Fig.~\ref{fig:method}. The first incorporates constraints as soft penalty terms in the training objective \citep{park2023selfsupervised, fioretto2020lagrangian, raissi2019physicsinformed}. This strategy is simple and widely applicable, but constraint satisfaction depends on penalty weights and training data coverage. Consequently, violations can still occur at inference time, especially under distribution shift or competing task objectives. The second category applies post-hoc safety filters, such as control barrier functions and predictive safety shields \citep{zhang2025control, silvestri2024unify, wabersich2023predictive, ames2017control}. These methods can enforce safety during deployment, but the correction is usually applied outside the prediction model. This weakens end-to-end supervision and can create a mismatch between the network output and the corrected action. The third category embeds differentiable constraint enforcement layers into neural networks \citep{iftakher2026physicsinformed, liang2025efficient, lastrucci2025enforce, donti2021dc3, amos2017optnet}. These methods allow the learning objective to account for the corrected output and can provide stronger feasibility guarantees. However, enforcing general nonlinear inequality constraints remains challenging, particularly when the number of constraints is large and the projection must be computed efficiently.

\begin{figure*}[htbp]
  \centering
  \includegraphics[width=0.8\textwidth]{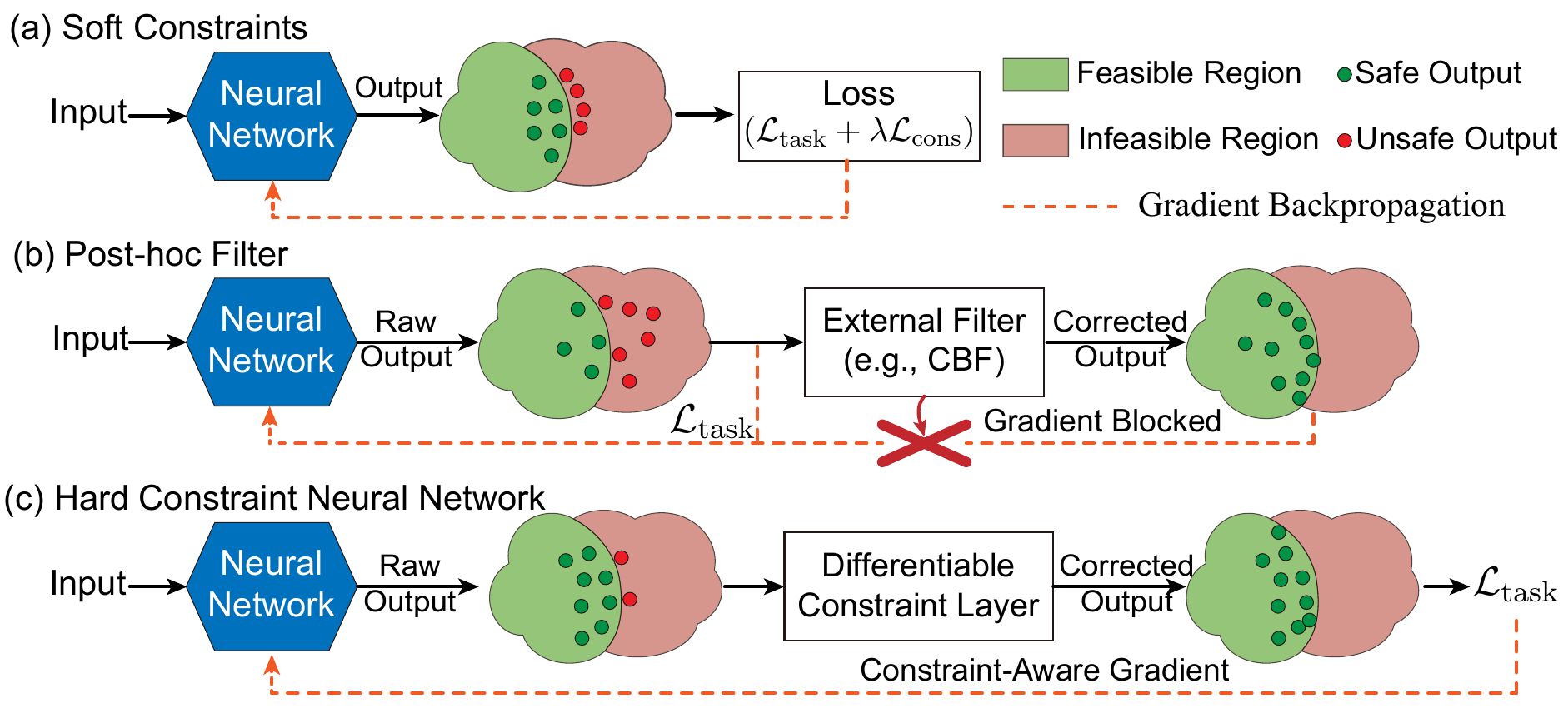}
  \caption{Comparison of three approaches to constraint handling in neural networks. (a) Soft constraints incorporate constraints as penalty terms in the loss function but provide no feasibility guarantee at inference time. (b) Post-hoc filters enforce constraints after prediction, which can create a mismatch between raw network outputs and corrected actions. (c) Hard constraint neural networks embed a differentiable constraint layer that enforces feasibility while preserving end-to-end gradient flow.}
  \label{fig:method}
\end{figure*}

To address these challenges, we propose DiffSlack, a differentiable hard constraint projection framework for high-dimensional nonlinear inequality constraints. The key idea is to reformulate inequalities as equalities with learnable slack variables. Unlike conventional auxiliary variables that are introduced only inside a solver, the slack variables in DiffSlack are predicted together with the task output. They therefore provide an amortized initialization of inactive constraint margins before projection. A damped Gauss-Newton projection layer then maps the augmented output onto the equality manifold defined by the slack reformulation. This layer is differentiable and can be inserted into the network for end-to-end training. To stabilize optimization, we further use a two-stage curriculum learning \citep{bengio2009curriculum}. The first stage trains the predictor with soft constraint losses and slack calibration, while the second stage activates the hard projection layer. This design allows the projection module to enforce feasibility while also providing task-aware correction signals during training.

We evaluate DiffSlack on constrained vehicle path planning, where the output trajectory must satisfy 200 coupled nonlinear inequality constraints from collision avoidance, curvature limits, and waypoint spacing. Our experiments compare DiffSlack with classical planners, soft-constrained learning, DC3 \citep{donti2021dc3}, and ENFORCE \citep{lastrucci2025enforce}. The results show that DiffSlack achieves a stronger trade-off among planning success, geometric constraint satisfaction, and computation time than existing learning-based methods. We further validate the generated trajectories in CARLA \citep{dosovitskiy2017carla} and on a physical vehicle.

The main contributions of this work are as follows:
\begin{enumerate}
  \item DiffSlack provides a scalable differentiable projection framework for neural prediction under nonlinear inequality constraints, with learnable slack variables serving as a data-driven initialization for iterative projection. Theoretical analysis characterizes the local convergence behavior and numerical conditioning.

  \item A two-stage curriculum strategy is proposed to stabilize end-to-end training with the hard projection layer, enabling the network to approach the feasible manifold before projection is activated.

  \item Experiments on large-scale constrained path planning with 200 nonlinear inequality constraints demonstrate that DiffSlack achieves higher success rates and better geometric constraint satisfaction than representative learning-based baselines. CARLA simulation and real-world vehicle tracking further validate the executability of the generated trajectories.
\end{enumerate}

\section{Related Work}

Soft constraint methods incorporate physical and operational requirements as penalty terms in the training objective, including penalty-based regularization \citep{csuzdi2026physicsinformed, yu2023knowledge, raissi2019physicsinformed} and Lagrangian duality methods \citep{fioretto2020lagrangian, park2023selfsupervised}. These approaches are simple and broadly applicable, but they do not guarantee constraint satisfaction at inference time. Their effectiveness depends on penalty weights, optimization quality, and the coverage of the training data. Post-hoc safety filters, such as control barrier functions \citep{zhang2025control, ames2017control} and predictive safety shields \citep{wabersich2023predictive, WABERSICH2021109597}, enforce constraints after prediction. They can provide strong deployment-time safety, but the correction is usually applied outside the prediction model. This may create a mismatch between the raw network output and the corrected action, especially when large corrections are frequently required. A more integrated strategy is to embed constraint satisfaction directly into the network architecture through differentiable enforcement layers \citep{jin2019survey}. Such methods allow the training objective to account for the corrected output and can provide stronger feasibility guarantees. We review them below according to the class of constraints they address.

\paragraph{Linear constraints}
A line of work exploits the algebraic structure of linear or affine constraints to design efficient enforcement layers with closed-form or iterative solutions. HCP \citep{chen2021theory} guarantees that network predictions satisfy finite-difference discretizations of governing PDEs via closed-form projection, but it is limited to linear equality constraints. LOOP-LC \citep{li2023learning} maps neural outputs from an \(\ell_\infty\)-norm unit ball to the feasible set through a gauge map, achieving iteration-free feasibility for hard linear equality and inequality constraints, but relying on linear structure and an available interior point. LinSATNet \citep{pmlr-v202-wang23at} proposes a Sinkhorn-based iterative scheme for positive linear constraints on input-independent outputs. GLinSAT \citep{guo2024glinsat} extends this framework to general linear constraints, with feasibility guarantees that tighten as the number of iterations increases. KKT-hPINN \citep{CHEN2024108764} embeds input-dependent linear equality constraints into physics-informed networks via KKT conditions and achieves strict feasibility through a closed-form projection, but it is confined to equality constraints. HardNet \citep{min2024hardnet} handles input-dependent affine equality and inequality constraints through a parallel projection layer with a closed-form forward pass, and establishes universal approximation guarantees under the constrained architecture.

\paragraph{Convex constraints}
Beyond methods that exploit linear or affine structure, another line of work formulates more general convex feasibility or optimization problems as differentiable layers. OptNet \citep{amos2017optnet} embeds a quadratic program into the network and computes gradients via implicit differentiation through the KKT conditions. CvxpyLayers \citep{cvxpylayers2019} generalizes this idea to convex programs expressible in CVXPY, enabling broader constraint coverage at the cost of solving an optimization problem during the forward pass. To reduce this overhead, RAYEN \citep{tordesillas2026rayen} guarantees feasibility by scaling the step length along a ray from a known interior point. It supports linear, convex quadratic, second-order cone, and linear matrix inequality constraints with low computational cost. PiNet \citep{grontas2026pinet} appends an orthogonal projection layer computed by Douglas-Rachford splitting, with gradients obtained through the implicit function theorem. It supports general convex constraints, including input-dependent ones.

\paragraph{General nonlinear constraints}
Extending hard constraint enforcement to general nonlinear constraints is more challenging. DC3 \citep{donti2021dc3} addresses nonlinear equality and inequality constraints by combining equality completion with gradient-based correction. However, its convergence can be sensitive to step size selection and may require many iterations for tight feasibility. BarrierNet \citep{wei2023barriernet} embeds differentiable control barrier functions into quadratic program layers for end-to-end safe learning, targeting systems whose safety constraints are defined through control-affine dynamics. ENFORCE \citep{lastrucci2025enforce} applies adaptive-depth Gauss-Newton projection with closed-form updates at each step and handles inequality constraints through a Fischer-Burmeister reformulation in an augmented multiplier space. KKT-HardNet \citep{iftakher2026physicsinformed} constructs the full KKT system of the projection problem and solves it with Newton or Gauss-Newton iterations, optionally after log-exponential transformations of nonlinear algebraic terms. Although this formulation can enforce nonlinear equality and inequality constraints to high precision, each inequality introduces additional slack variables, dual multipliers, and complementarity equations, leading to a large primal-dual nonlinear system. This overhead can become prohibitive for large-scale inequality-constrained prediction problems. Homeomorphic Projection \citep{liang2024homeomorphic} learns a minimum-distortion mapping between the constraint set and a unit ball via an invertible neural network, then recovers feasibility through bisection along the mapped radius. This provides feasibility guarantees for ball-homeomorphic constraint sets, but the optimality bound depends on the distortion of the learned mapping. Bisection Projection \citep{liang2025efficient} relaxes the topological requirement by training an auxiliary network to predict interior points of the feasible set, followed by bisection from these points to recover feasibility for general compact sets. Both methods introduce additional networks, which increase training complexity and may introduce approximation error. These limitations motivate a direct and scalable projection mechanism for nonlinear inequality-constrained neural prediction.

\section{Methodology}
\label{sec:method}

Let $f_\theta:\mathcal{X}\rightarrow\mathbb{R}^{N_O}$ be a neural network that maps an input $\mathbf{x}\in\mathcal{X}$ to a raw prediction $\hat{\mathbf{p}}=f_\theta(\mathbf{x})$. The prediction is required to satisfy $N_C$ input-dependent nonlinear inequality constraints
\begin{equation}
    \mathbf{g}(\mathbf{x},\mathbf{p})\leq \mathbf{0},
    \qquad
    \mathbf{g}:\mathcal{X}\times\mathbb{R}^{N_O}\rightarrow\mathbb{R}^{N_C}.
\end{equation}
The ideal correction is the nearest feasible point to the raw prediction:
\begin{equation}
    \mathbf{p}^{*}
    =
    \underset{\mathbf{p}}{\mathrm{argmin}}\;
    \frac{1}{2}\|\mathbf{p}-\hat{\mathbf{p}}\|_2^2
    \quad
    \mathrm{s.t.}\quad
    \mathbf{g}(\mathbf{x},\mathbf{p})\leq \mathbf{0}.
    \label{eq:original_problem}
\end{equation}
During projection, $\mathbf{x}$ is fixed. We therefore write $\mathbf{g}(\mathbf{p})$ for brevity, while all input-dependent quantities are treated as constants with respect to $\mathbf{p}$.

\begin{assumption}[Feasibility]
\label{ass:feasibility}
For every $\mathbf{x}\in\mathcal{X}$, the feasible set $\{\mathbf{p}\in\mathbb{R}^{N_O}\mid \mathbf{g}(\mathbf{x},\mathbf{p})\leq\mathbf{0}\}$ is non-empty.
\end{assumption}

DiffSlack embeds the correction in Eq.~(\ref{eq:original_problem}) as a differentiable projection layer. The main idea is to convert inequalities into an augmented equality manifold with learnable slack variables, and then project raw predictions onto this manifold through a Gauss-Newton layer. Figure~\ref{fig:framework} illustrates the overall framework.

\begin{figure*}
  \centering
  \includegraphics[width=\textwidth]{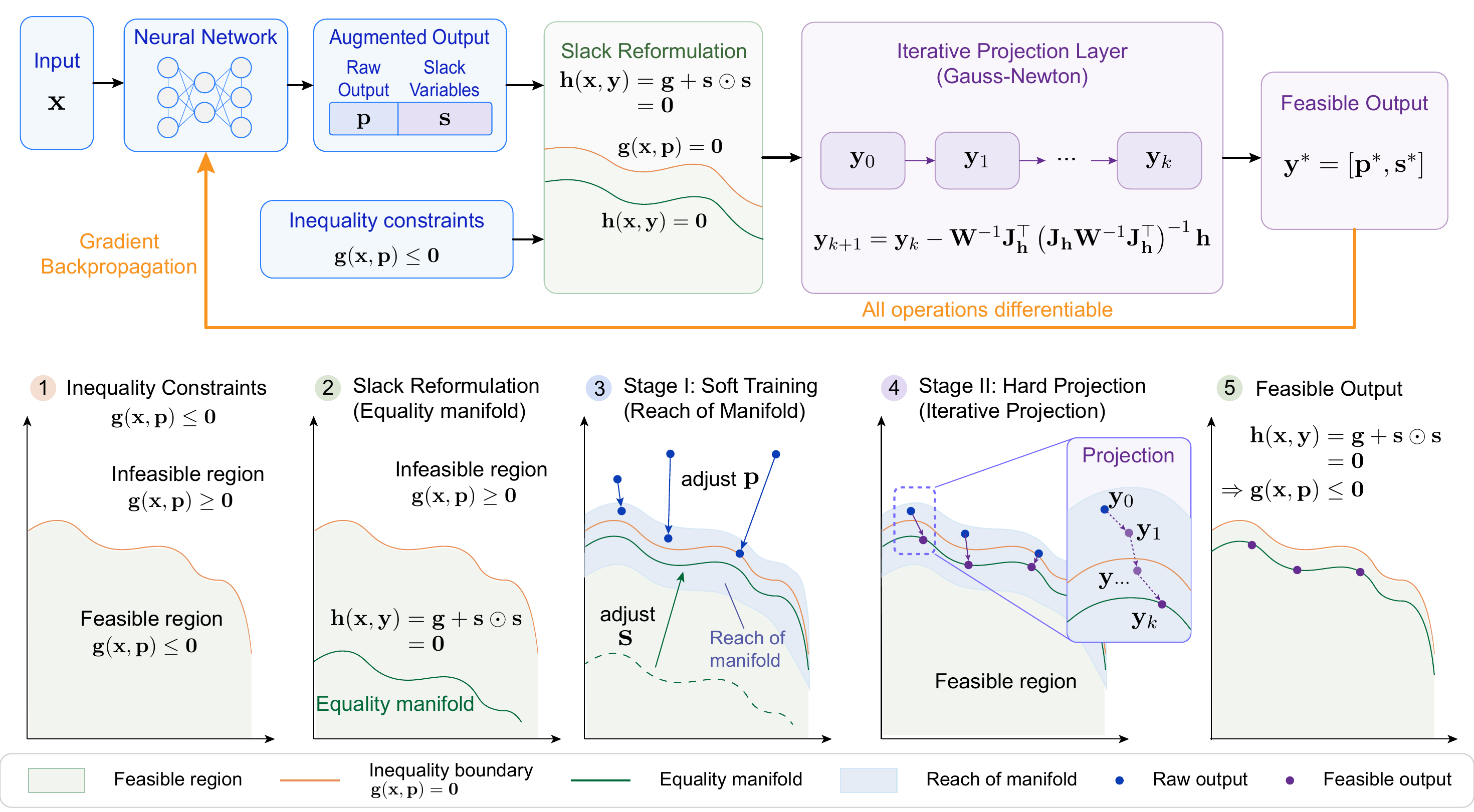}
  \caption{Overall framework and geometric intuition of DiffSlack. The network predicts both the task output $\hat{\mathbf{p}}$ and slack variables $\hat{\mathbf{s}}$. Inequality constraints are reformulated as an equality manifold $\mathcal{M}$ through $\mathbf{h}(\mathbf{y})=\mathbf{0}$. Stage I learns slack variables that approximate the inactive-side constraint margin and moves raw predictions toward the reach tube of $\mathcal{M}$. Stage II activates the differentiable Gauss-Newton projection layer, producing feasible outputs while preserving end-to-end gradient flow.}
  \label{fig:framework}
\end{figure*}

\subsection{Slack Variable Reformulation}
\label{sec:slack}

Inequality constraints define a feasible region rather than a smooth equality manifold. To obtain a differentiable projection target, we introduce slack variables $\mathbf{s}\in\mathbb{R}^{N_C}$ and define the augmented state
\begin{equation}
    \mathbf{y}=\begin{bmatrix}\mathbf{p}^\top & \mathbf{s}^\top\end{bmatrix}^\top
    \in\mathbb{R}^{N_O+N_C}.
\end{equation}
Each inequality is reformulated as
\begin{equation}
    \mathbf{h}(\mathbf{y})
    =
    \mathbf{g}(\mathbf{p})+\mathbf{s}\odot\mathbf{s}
    =\mathbf{0},
    \label{eq:slack_reformulation}
\end{equation}
where $\odot$ denotes the Hadamard product. Since $\mathbf{s}\odot\mathbf{s}$ is elementwise nonnegative, feasibility on the augmented manifold implies feasibility of the original inequalities.

\begin{theorem}[Inequality satisfaction]
\label{thm:strict_ineq}
For any $\mathbf{y}=[\mathbf{p}^\top,\mathbf{s}^\top]^\top$, if $\mathbf{h}(\mathbf{y})=\mathbf{0}$, then $\mathbf{g}(\mathbf{p})\leq\mathbf{0}$.
\end{theorem}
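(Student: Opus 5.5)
The plan is a direct componentwise argument from the definition of $\mathbf{h}$ in Eq.~(\ref{eq:slack_reformulation}); no earlier result beyond that definition is needed, and in particular Assumption~\ref{ass:feasibility} plays no role.

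First I would fix an arbitrary index $i\in\{1,\dots,N_C\}$ and write out the $i$-th component of the hypothesis $\mathbf{h}(\mathbf{y})=\mathbf{0}$. Because the Hadamard product acts entrywise, $(\mathbf{s}\odot\mathbf{s})_i=s_i^2$. This gives
\begin{equation*}
    g_i(\mathbf{p})+s_i^2=0,
    \qquad\text{so}\qquad
    g_i(\mathbf{p})=-s_i^2 .
\end{equation*}

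Next I would use that $s_i\in\mathbb{R}$, so $s_i^2\geq 0$. Hence $g_i(\mathbf{p})=-s_i^2\leq 0$. Since $i$ was arbitrary, the componentwise inequality $\mathbf{g}(\mathbf{p})\leq\mathbf{0}$ follows. This is exactly the ordering convention used in Eq.~(\ref{eq:original_problem}).

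There is no genuine obstacle here. The only point to state explicitly is that the slack variables are real-valued, i.e.\ $\mathbf{s}\in\mathbb{R}^{N_C}$ as stipulated in Section~\ref{sec:slack}; if they were complex, the nonnegativity of $s_i^2$ would fail. I would also remark that the argument gives slightly more than the statement: it identifies each constraint value as $g_i(\mathbf{p})=-s_i^2$. Consequently, constraint $i$ is active exactly when $s_i=0$, and otherwise $s_i^2$ is the margin by which it is satisfied. This justifies interpreting $\hat{\mathbf{s}}$ as a prediction of the inactive-side constraint margin.
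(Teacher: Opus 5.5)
Your proof is correct and is the same as the paper's. Both read off $\mathbf{g}(\mathbf{p})=-\mathbf{s}\odot\mathbf{s}$ from $\mathbf{h}(\mathbf{y})=\mathbf{0}$ and use the elementwise nonnegativity of the squared real slack; you simply write it out one component at a time.
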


\begin{proof}
From Eq.~(\ref{eq:slack_reformulation}), $\mathbf{h}(\mathbf{y})=\mathbf{0}$ gives $\mathbf{g}(\mathbf{p})=-\mathbf{s}\odot\mathbf{s}$. Since $\mathbf{s}\odot\mathbf{s}\geq\mathbf{0}$ elementwise, the result follows.
\end{proof}

The augmented Jacobian is
\begin{equation}
    \mathbf{J}_{\mathbf{h}}(\mathbf{y})
    =
    \left[
        \mathbf{J}_{\mathbf{g}}(\mathbf{p})
        \;\Big|\;
        \mathrm{diag}(2\mathbf{s})
    \right],
    \label{eq:jacobian_block}
\end{equation}
where $\mathbf{J}_{\mathbf{g}}(\mathbf{p})\in\mathbb{R}^{N_C\times N_O}$. The diagonal slack block provides an independent correction direction for each inequality constraint in the augmented space. This structure improves the local conditioning of the linearized projection system whenever the corresponding slack variables are nonzero, while near-degenerate cases are stabilized by the damped solve in Section~\ref{sec:projection_layer}.

\subsection{Learned Slack as Margin Prediction}
\label{sec:learned_slack}

The slack variables in DiffSlack are not passive solver variables. They are predicted by the network and serve as a data-driven estimate of the inactive-side constraint margin. For a raw prediction \(\hat{\mathbf{p}}\), define
\begin{equation}
    \mathbf{q}(\hat{\mathbf{p}})
    :=
    [-\mathbf{g}(\hat{\mathbf{p}})]_+,
    \qquad
    [\mathbf{v}]_+=(\max\{v_i,0\})_{i=1}^{N_C}.
    \label{eq:slack_target}
\end{equation}
Here, \([\mathbf{g}(\hat{\mathbf{p}})]_+\) denotes active constraint violation, while \(\mathbf{q}(\hat{\mathbf{p}})\) denotes the margin of constraints that are already satisfied. Let \(\hat{\mathbf{q}}=\hat{\mathbf{s}}\odot\hat{\mathbf{s}}\). The equality residual can then be decomposed as
\begin{equation}
    \mathbf{h}(\hat{\mathbf{y}})
    =
    \mathbf{g}(\hat{\mathbf{p}})+\hat{\mathbf{q}}
    =
    [\mathbf{g}(\hat{\mathbf{p}})]_+
    +
    \big(
    \hat{\mathbf{q}}-\mathbf{q}(\hat{\mathbf{p}})
    \big).
    \label{eq:residual_decomposition}
\end{equation}
This decomposition separates the residual into active constraint violation and slack estimation error. With zero slack initialization, the equality residual also contains the inactive-margin term \(-\mathbf{q}(\hat{\mathbf{p}})\), even for constraints that are already satisfied. In contrast, when the predicted squared slack \(\hat{\mathbf{q}}\) approximates \(\mathbf{q}(\hat{\mathbf{p}})\), this inactive-margin component is absorbed before projection.

To quantify this effect, suppose the learned squared slack \(\hat{\mathbf{q}}_L\) approximates \(\mathbf{q}(\hat{\mathbf{p}})\) with mean-square error \(\varepsilon_L^2\), and the estimation error is conditionally zero-mean given \(\hat{\mathbf{p}}\):
\begin{equation}
    \mathbb{E}
    \left[
    \|\hat{\mathbf{q}}_L-\mathbf{q}(\hat{\mathbf{p}})\|_2^2
    \right]
    =
    \varepsilon_L^2 .
    \label{eq:slack_error}
\end{equation}
Taking the squared norm of Eq.~(\ref{eq:residual_decomposition}) and using the zero-mean assumption to remove the cross term gives
\begin{equation}
    \mathbb{E}
    \left[
    \|\mathbf{h}(\hat{\mathbf{y}}_L)\|_2^2
    \right]
    =
    \mathbb{E}
    \left[
    \|[\mathbf{g}(\hat{\mathbf{p}})]_+\|_2^2
    \right]
    +
    \varepsilon_L^2 .
    \label{eq:expected_residual_compact}
\end{equation}
For zero slack initialization \(\hat{\mathbf{q}}_0=\mathbf{0}\), the residual becomes \(\mathbf{h}(\hat{\mathbf{y}}_0)=\mathbf{g}(\hat{\mathbf{p}})\). Since \([\mathbf{g}]_+\) and \([-\mathbf{g}]_+\) have disjoint elementwise support,
\begin{equation}
    \mathbb{E}
    \left[
    \|\mathbf{h}(\hat{\mathbf{y}}_0)\|_2^2
    \right]
    =
    \mathbb{E}
    \left[
    \|[\mathbf{g}(\hat{\mathbf{p}})]_+\|_2^2
    \right]
    +
    \mathbb{E}
    \left[
    \|[-\mathbf{g}(\hat{\mathbf{p}})]_+\|_2^2
    \right].
    \label{eq:zero_slack_residual_compact}
\end{equation}
Therefore, learned slack reduces the expected equality residual whenever its estimation error is smaller than the inactive-margin energy.

Let
$
    \mathcal{M}
    =
    \{
    \mathbf{y}\in\mathbb{R}^{N_O+N_C}
    \mid
    \mathbf{h}(\mathbf{y})=\mathbf{0}
    \}
$
be the augmented feasible manifold. We consider a regular local tube around \(\mathcal{M}\), where \(\mathbf{h}\) is twice continuously differentiable, \(\mathbf{J}_{\mathbf{h}}\) is Lipschitz continuous, and its smallest singular value is uniformly bounded below by a constant \(\sigma_->0\):
\begin{equation}
    \sigma_{\min}
    \left(
    \mathbf{J}_{\mathbf{h}}(\mathbf{y})
    \right)
    \geq
    \sigma_- .
    \label{eq:regularity_bound}
\end{equation}
Under this standard local regularity condition, the residual controls the Euclidean distance to the manifold:
\begin{equation}
    \operatorname{dist}(\hat{\mathbf{y}},\mathcal{M})
    \leq
    2\sigma_-^{-1}
    \|\mathbf{h}(\hat{\mathbf{y}})\|_2 .
    \label{eq:distance_residual_bound}
\end{equation}
This follows from a first-order Taylor expansion of \(\mathbf{h}\) around the nearest point on \(\mathcal{M}\), where the second-order remainder is dominated in a sufficiently small neighborhood. Combining Eq.~(\ref{eq:expected_residual_compact}) and Eq.~(\ref{eq:distance_residual_bound}) gives
\begin{equation}
    \mathbb{E}
    \left[
    \operatorname{dist}(\hat{\mathbf{y}}_L,\mathcal{M})^2
    \right]
    \leq
    4\sigma_-^{-2}
    \left(
    \mathbb{E}
    \left[
    \|[\mathbf{g}(\hat{\mathbf{p}})]_+\|_2^2
    \right]
    +
    \varepsilon_L^2
    \right).
    \label{eq:learned_distance_bound}
\end{equation}
Hence, learned slack variables provide a principled mechanism for reducing the expected local distance to the augmented feasible manifold by estimating inactive margins before projection.

\subsection{Differentiable Gauss-Newton Projection Layer}
\label{sec:projection_layer}

Given a raw augmented prediction $\hat{\mathbf{y}}=[\hat{\mathbf{p}}^\top,\hat{\mathbf{s}}^\top]^\top$, DiffSlack projects it onto $\mathcal{M}$ by solving
\begin{equation}
    \mathbf{y}^{*}
    =
    \underset{\mathbf{y}}{\mathrm{argmin}}\;
    \frac{1}{2}\|\mathbf{y}-\hat{\mathbf{y}}\|_{\mathbf{W}}^2
    \quad
    \mathrm{s.t.}\quad
    \mathbf{h}(\mathbf{y})=\mathbf{0},
    \label{eq:projection_problem}
\end{equation}
where
\begin{equation}
    \|\mathbf{v}\|_{\mathbf{W}}^2=\mathbf{v}^\top\mathbf{W}\mathbf{v},
    \qquad
    \mathbf{W}=\mathrm{diag}(w_o\mathbf{1}_{N_O},w_s\mathbf{1}_{N_C}),
    \qquad
    w_o>w_s>0.
\end{equation}
The larger task-output weight discourages unnecessary changes to the original prediction coordinates, while the smaller slack weight allows residual mismatch to be absorbed by the auxiliary slack variables whenever possible.

\paragraph{Reach tube.}
The projection layer is local. Let $\operatorname{reach}(\mathcal{M})$ denote the largest radius such that every point within that distance from $\mathcal{M}$ has a unique nearest point on $\mathcal{M}$ \citep{federer1959curvature}. For any $\rho<\operatorname{reach}(\mathcal{M})$, define the reach tube
\begin{equation}
    \mathcal{T}_\rho
    =
    \{\mathbf{y}\mid \operatorname{dist}(\mathbf{y},\mathcal{M})<\rho\}.
    \label{eq:reach_tube}
\end{equation}
Inside this tube, the nearest-point projection is geometrically well-defined. Under the local regularity condition in Eq.~(\ref{eq:regularity_bound}), a sufficiently small sub-tube also forms a local convergence basin for the Gauss-Newton projection. Therefore, Stage I is designed to move raw outputs toward this reach tube before the hard projection layer is activated. Since
\begin{equation}
    \mathbb{P}(\hat{\mathbf{y}}\in\mathcal{T}_\rho)
    \geq
    1-
    \mathbb{E}[\operatorname{dist}_{\mathbf{W}}(\hat{\mathbf{y}},\mathcal{M})^2]/\rho^2,
    \label{eq:markov_reach}
\end{equation}
reducing the expected distance to the manifold increases this distribution-free lower bound on the probability of entering the reach tube.

\paragraph{Gauss-Newton update.}
Solving Eq.~(\ref{eq:projection_problem}) exactly would require second-order constraint derivatives. We instead linearize $\mathbf{h}$ around the current iterate $\mathbf{y}_k$ and solve a weighted minimum-norm correction problem:
\begin{equation}
    \mathbf{y}_{k+1}
    =
    \underset{\mathbf{y}}{\mathrm{argmin}}\;
    \frac{1}{2}\|\mathbf{y}-\mathbf{y}_k\|_{\mathbf{W}}^2
    \quad
    \mathrm{s.t.}\quad
    \mathbf{h}(\mathbf{y}_k)
    +
    \mathbf{J}_{\mathbf{h}}(\mathbf{y}_k)(\mathbf{y}-\mathbf{y}_k)
    =
    \mathbf{0}.
    \label{eq:linearized_projection}
\end{equation}
The closed-form update is
\begin{equation}
    \mathbf{y}_{k+1}
    =
    \mathbf{y}_k
    -
    \mathbf{W}^{-1}\mathbf{J}_{\mathbf{h}}^\top(\mathbf{y}_k)
    \left(
    \mathbf{J}_{\mathbf{h}}(\mathbf{y}_k)
    \mathbf{W}^{-1}
    \mathbf{J}_{\mathbf{h}}^\top(\mathbf{y}_k)
    \right)^{-1}
    \mathbf{h}(\mathbf{y}_k).
    \label{eq:newton_step}
\end{equation}
The Gram matrix is solved by Cholesky decomposition with diagonal damping $\delta\mathbf{I}$, where $\delta=10^{-4}$ in our experiments. Iteration stops when $\|\mathbf{h}(\mathbf{y}_k)\|_\infty<\epsilon$ or the number of iterations reaches $I_{\max}$.

\paragraph{Local convergence and conditioning.}
Around a feasible point $\mathbf{y}^{*}\in\mathcal{M}$ satisfying the regularity condition, the update in Eq.~(\ref{eq:newton_step}) satisfies
\begin{equation}
    \|\mathbf{e}^{(k+1)}\|_{\mathbf{W}}
    \leq
    \|\mathbf{M}_{\mathbf{W}}\|_{\mathbf{W}}
    \|\mathbf{e}^{(k)}\|_{\mathbf{W}}
    +
    O(\|\mathbf{e}^{(k)}\|_{\mathbf{W}}^2),
    \label{eq:local_linear_convergence}
\end{equation}
where $\mathbf{e}^{(k)}=\mathbf{y}_k-\mathbf{y}^{*}$ and
\begin{equation}
    \mathbf{M}_{\mathbf{W}}
    =
    \mathbf{I}
    -
    \mathbf{W}^{-1}\mathbf{J}_{\mathbf{h}}^\top
    \left(
    \mathbf{J}_{\mathbf{h}}
    \mathbf{W}^{-1}
    \mathbf{J}_{\mathbf{h}}^\top
    \right)^{-1}
    \mathbf{J}_{\mathbf{h}}
    \label{eq:mw_def}
\end{equation}
is the $\mathbf{W}$-weighted tangent projector. Since $\mathbf{M}_{\mathbf{W}}$ is a $\mathbf{W}$-orthogonal tangent projector, the linearized update suppresses the normal component of the local error to first order.

The slack block also stabilizes the linear solve. From Eq.~(\ref{eq:jacobian_block}), the Gram matrix is
\begin{equation}
    \mathbf{J}_{\mathbf{h}}\mathbf{W}^{-1}\mathbf{J}_{\mathbf{h}}^\top
    =
    w_o^{-1}\mathbf{J}_{\mathbf{g}}\mathbf{J}_{\mathbf{g}}^\top
    +
    4w_s^{-1}\mathrm{diag}(\mathbf{s}^2).
    \label{eq:gram_slack}
\end{equation}
Thus nonzero slack components lift the spectral floor:
\begin{equation}
    \lambda_{\min}\!\left(
    \mathbf{J}_{\mathbf{h}}\mathbf{W}^{-1}\mathbf{J}_{\mathbf{h}}^\top
    \right)
    \geq
    w_o^{-1}\lambda_{\min}\!\left(\mathbf{J}_{\mathbf{g}}\mathbf{J}_{\mathbf{g}}^\top\right)
    +
    4w_s^{-1}\min_i s_i^2.
    \label{eq:spectral_floor}
\end{equation}
This reduces residual amplification in the linear solve, especially when $\mathbf{J}_{\mathbf{g}}$ is nearly rank-deficient.

\subsection{Gradient Flow and Implicit Constraint Supervision}
\label{sec:gradient_flow}

The projection layer is differentiable. Instead of backpropagating through all solver iterations, we differentiate the KKT conditions of Eq.~(\ref{eq:projection_problem}) at the projected point, following implicit differentiation for optimization layers \citep{gould2016differentiating, blondel2022efficient, quentin2022implicit}. Locally, this gives the first-order sensitivity
\begin{equation}
    \partial\mathbf{y}^{*}/\partial\hat{\mathbf{y}}
    =
    \mathbf{M}_{\mathbf{W}}.
    \label{eq:projection_jacobian}
\end{equation}
Therefore, gradients passed to the network are projected onto the tangent space of $\mathcal{M}$ at $\mathbf{y}^{*}$ under the $\mathbf{W}$-weighted geometry. Since $\mathbf{M}_{\mathbf{W}}$ is a projector with spectral radius no larger than one, the projection step does not locally amplify gradients. More importantly, gradient components normal to the constraint manifold are suppressed, while tangent components are preserved. The network is therefore trained through directions that are locally compatible with the constraint geometry. We refer to this effect as implicit constraint supervision: even when the external task supervision is coarse, the projection layer filters the output gradient toward feasible directions.

\subsection{Two-Stage Curriculum Training}
\label{sec:curriculum}

Directly activating the hard projection layer from random initialization is unstable, because Gauss-Newton projection is local and assumes that raw augmented outputs are close to the reach tube of $\mathcal{M}$. DiffSlack therefore uses a two-stage curriculum learning \citep{bengio2009curriculum}.

\paragraph{Stage I: slack-calibrated soft training.}
In Stage I, the projection layer is disabled. The network is trained with
\begin{equation}
    \mathcal{L}_{\mathrm{Stage1}}
    =
    \mathcal{L}_{\mathrm{task}}
    +
    \lambda_{\mathrm{soft}}\mathcal{L}_{\mathrm{soft}}
    +
    \lambda_{\mathrm{slack}}\mathcal{L}_{\mathrm{slack}}.
    \label{eq:stage1_loss}
\end{equation}
The soft constraint loss penalizes active inequality violations:
\begin{equation}
    \mathcal{L}_{\mathrm{soft}}
    =
    N_C^{-1}
    \left\|[\mathbf{g}(\hat{\mathbf{p}})]_+\right\|_1.
    \label{eq:soft_loss}
\end{equation}
The slack loss fits the squared slack variables to the inactive-side margin of the current prediction:
\begin{equation}
    \mathcal{L}_{\mathrm{slack}}
    =
    N_C^{-1}
    \left\|
    \mathbf{g}(\mathrm{sg}(\hat{\mathbf{p}}))
    +
    \hat{\mathbf{s}}\odot\hat{\mathbf{s}}
    \right\|_1,
    \label{eq:slack_loss}
\end{equation}
where $\mathrm{sg}(\cdot)$ denotes the stop-gradient operator. The stop-gradient prevents $\mathcal{L}_{\mathrm{slack}}$ from altering the task prediction and ensures that this term only calibrates the slack variables. By Eq.~(\ref{eq:residual_decomposition}), $\mathcal{L}_{\mathrm{soft}}$ reduces the active-violation term, while $\mathcal{L}_{\mathrm{slack}}$ reduces the slack estimation error. Together, they reduce the distance from raw augmented outputs to $\mathcal{M}$ and increase the probability that samples lie inside the local reach tube.

\paragraph{Stage II: end-to-end hard projection training.}
In Stage II, the projection layer is activated. The network produces $\hat{\mathbf{y}}$, the projection layer returns $\mathbf{y}^{*}=\mathcal{P}(\hat{\mathbf{y}})$, and the task loss is computed on the projected output. The slack loss is removed because feasibility is now enforced by the projection layer. The objective becomes
\begin{equation}
    \mathcal{L}_{\mathrm{Stage2}}
    =
    \mathcal{L}_{\mathrm{task}}(\mathbf{y}^{*})
    +
    \lambda_{\mathrm{proj}}\mathcal{L}_{\mathrm{proj}}
    +
    \lambda_{\mathrm{soft}}\mathcal{L}_{\mathrm{soft}}.
    \label{eq:stage2_loss}
\end{equation}
The projection loss is
\begin{equation}
    \mathcal{L}_{\mathrm{proj}}
    =
    \|\hat{\mathbf{y}}-\mathbf{y}^{*}\|_2^2.
    \label{eq:projection_loss}
\end{equation}
It encourages raw predictions to remain close to the feasible manifold, which stabilizes the subsequent projection. The soft constraint loss is retained as an auxiliary regularizer that discourages large raw violations before projection.

Overall, Stage I calibrates slack variables to estimate inactive-side margins and moves raw outputs toward the reach tube. Stage II then imposes hard feasibility through differentiable projection and propagates constraint-aware gradients back to the network.

\section{EXPERIMENTS}

\subsection{Problem Formulation}\label{sec:problem}

We consider the problem of generating a kinematically feasible and collision-free path for a vehicle navigating through obstacle-dense unstructured environments \citep{guo2024survey}. The path is represented as a sequence of $T=40$ discrete waypoints $\mathbf{P} = (\mathbf{p}_1, \dots, \mathbf{p}_T) \in \mathbb{R}^{T \times 2}$, where each $\mathbf{p}_t = (x_t, y_t)$ denotes a 2D position in the workspace. The vehicle starts from a fixed initial position $\mathbf{p}_0 = (0, 0)$. The planning objective is to minimize the total path length while satisfying a set of hard constraints:
\begin{equation}
\min_{\mathbf{P}} \sum_{t=0}^{T-1} \|\mathbf{p}_{t+1} - \mathbf{p}_t\|_2 
\quad \text{s.t.} \quad \mathcal{C}_\text{col}(\mathbf{P}) \leq 0, \quad 
\mathcal{C}_\text{kin}(\mathbf{P}) \leq 0, \quad 
\mathcal{C}_\text{spc}(\mathbf{P}) \leq 0.
\end{equation}

The three constraint categories are defined as follows.

\paragraph{Collision Avoidance Constraint}
The vehicle body is approximated by three overlapping circles centered at positions $\{\mathbf{q}_{t,r}\}_{r=1}^{3}$ derived from each waypoint $\mathbf{p}_t$. Each obstacle is modeled as a convex polygon described by $M_j$ half-plane inequalities $l_{j,m}(\mathbf{p}) \geq 0$. To obtain a differentiable signed distance for each obstacle, we apply the 
Log-Sum-Exp (LSE) approximation \citep{boyd2004convex}:
\begin{equation}
c_j(\mathbf{p}) \approx -\alpha^{-1}\log\!\left(\sum_{m=1}^{M_j} 
\exp\!\left(-\alpha\, l_{j,m}(\mathbf{p})\right)\right),
\end{equation}
where $c_j(\mathbf{p}) < 0$ indicates that $\mathbf{p}$ lies inside obstacle $j$, and $\alpha = 10$ controls the approximation tightness. The collision constraint then requires that every circle $r$ at waypoint $t$ lies outside all obstacles simultaneously, which is enforced by a second LSE aggregation over the $N_\text{obs}$ obstacles:
\begin{equation}
\mathcal{C}_\text{col}^{t,r}(\mathbf{p}_t) = 
\alpha^{-1}\log\!\left(\sum_{j=1}^{N_\text{obs}} 
\exp\!\left(\alpha\, c_j(\mathbf{q}_{t,r})\right)\right) \leq 0, 
\end{equation}
yielding $3T$ collision constraints in total. At evaluation time, collision detection is performed using exact rectangle-obstacle intersection checks to ensure accurate assessment of the success rate.

\paragraph{Curvature Constraint}
The vehicle is modeled by the kinematic bicycle model \citep{polack2017kinematic}, which imposes a minimum turning radius 
$R_\text{min}$ determined by the maximum steering angle. The curvature constraint requires that the local curvature $\kappa_t$ at each waypoint does not exceed the maximum allowable value $\kappa_\text{max} = 1/R_\text{min}$:
\begin{equation}
\mathcal{C}_\text{kin}^{t}(\mathbf{P}) = \kappa_t - \kappa_\text{max} 
\leq 0, \quad \forall t \in \{1,\dots,T\},
\end{equation}
yielding $T$ curvature constraints in total.

\paragraph{Waypoint Spacing Constraint}
To ensure geometric regularity and suppress interpolation errors during path tracking, the Euclidean distance between consecutive waypoints is constrained not to exceed a threshold of $d_\text{max} = 1.0\,\unit{m}$:
\begin{equation}
\mathcal{C}_\text{spc}^{t}(\mathbf{P}) = \|\mathbf{p}_{t+1} - 
\mathbf{p}_t\|_2 - d_\text{max} \leq 0, \quad 
\forall t \in \{0,\dots,T-1\}.
\end{equation}

Altogether, the three constraint categories produce $200$ inequality constraints for $T = 40$ waypoints.

\subsection{Environment Setup}
\paragraph{Environment and Data}
The experimental environment is spatially partitioned into an obstacle generation zone ($x \in [4, 28]\,\unit{m}$, $y \in [-10, 10]\,\unit{m}$) and a goal sampling zone ($x \in [30, 34]\,\unit{m}$, $y \in [-8, 8]\,\unit{m}$). Each scenario contains $N_\text{obs} = 8$ randomly generated quadrilateral obstacles with side lengths sampled from $l \sim \mathcal{U}(1, 4)\,\unit{m}$, as illustrated in Fig.~\ref{fig:env} (a). A safety margin is enforced around each 
obstacle during generation to ensure the existence of at least one feasible path in every scenario. A total of 200,000 scenarios are generated and partitioned into training, validation, and testing sets at a ratio of 6:3:1. 
\begin{figure*}
  \centering
  \includegraphics[width=\textwidth]{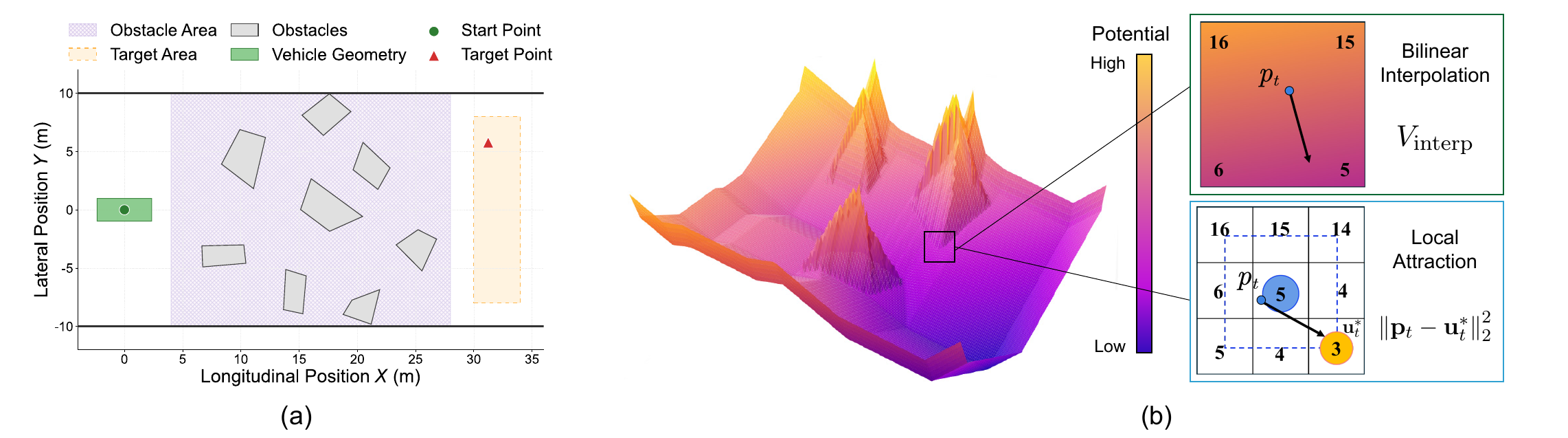}
  \caption{Experimental environment and G-APF supervision construction. (a) Static path-planning scenario with randomly generated convex polygonal obstacles and a sampled target area. (b) Global-guided artificial potential field, where high values are assigned near obstacles and low values near the target. For each waypoint, bilinear interpolation provides the potential value, and the local attraction target is selected from the lowest-potential point in the local neighborhood.
}\label{fig:env}
\end{figure*}

\paragraph{Supervision Signal}
Rather than relying on kinematically feasible expert demonstrations, which require expensive online solvers such as NMPC, we construct a Global-guided Artificial Potential Field (G-APF)~\citep{khatib1986realtime} as a cost-effective training signal. A global reference path $\mathcal{P}^*$ is first computed via Dijkstra's algorithm \citep{dijkstra1959note}, and a dense potential field $V(\mathbf{u})$ is propagated from $\mathcal{P}^*$ into the free space using multi-source wavefront expansion, with high penalties imposed within obstacle regions to create repulsive gradients at obstacle boundaries. The resulting field forms a convex valley along $\mathcal{P}^*$, providing dense gradient guidance across the workspace. The supervision loss combines a potential energy term that aligns each waypoint with the global energy valley, and a local attraction term that prevents stagnation in obstacle-dense regions where the potential gradient may become nearly orthogonal to the desired direction of 
progress:
\begin{equation}
\mathcal{L}_\text{pot} = T^{-1}\sum_{t=1}^{T}\left(
V_\text{interp}(\mathbf{p}_t) + \beta\min_{\mathbf{u}\in
\mathcal{N}(\mathbf{n}_t)}\|\mathbf{p}_t - \mathbf{u}\|_2^2\right).
\end{equation}
Compared to expert demonstrations, the G-APF provides only global topological guidance rather than precise waypoint supervision, making it a coarser but significantly cheaper training signal. The constructed potential field is illustrated in Fig.~\ref{fig:env} (b).


\paragraph{Baselines}
We compare the proposed framework against six representative baselines spanning classical and learning-based planning paradigms, with implementation details provided in Appendix~\ref{app:baselines}. (1) \textbf{Hybrid A*} \citep{HybridAStar} is a search-based planner that employs Reeds-Shepp curves to ensure kinematic feasibility. (2) \textbf{Informed RRT*} \citep{gammell2014informed} is a sampling-based planner with obstacle inflation of \qty{1.2}{m}. (3) \textbf{NMPC} \citep{rawlings2017model} is an optimization-based planner implemented via the CasADi framework \citep{andersson2019casadi} and the IPOPT solver \citep{wachter2006implementation}, sharing the same kinematic model and obstacle representation as our method. (4) \textbf{IL (Pure)} is an imitation learning baseline trained directly on NMPC expert demonstrations without any constraint handling. (5) \textbf{IL + Soft} extends IL (Pure) by incorporating soft-constraint penalties during training. (6) \textbf{DC3} \citep{donti2021dc3} handles nonlinear equality and inequality constraints by applying gradient-based correction to reduce constraint violation. (7) \textbf{ENFORCE} \citep{lastrucci2025enforce} is a differentiable projection-based method using adaptive-depth Gauss-Newton iterations. For inequality constraints, ENFORCE adopts a Fischer-Burmeister reformulation in an augmented multiplier space. For a fair comparison, DC3, ENFORCE, and DiffSlack are implemented with the same policy network, input representation, and G-APF supervision signal. In all three methods, the gradients of the constraint functions are computed by automatic differentiation.

\paragraph{Metrics}
We evaluate performance using two sets of metrics.

\textbf{Static Planning Metrics} include: (1) Computation Time (CT), the average inference latency per scenario; (2) Average Path Length (APL), the mean total path length computed over collision-free paths only; (3) Average Goal Distance (AGD), the mean Euclidean distance from the 
final waypoint to the goal, computed over collision-free paths only; (4) Success Rate (SR), the percentage of trials in which the planner produces a valid collision-free path; (5) Kinematic Compliance ($\mathcal{S}_\text{kin}$), the mean ratio of maximum allowable curvature to local curvature:
\begin{equation}
\mathcal{S}_\text{kin} = T^{-1}\sum_{t=1}^{T}\min\left(\kappa_\text{max}/\kappa_t,\ 1\right);
\end{equation}
and (6) Spacing Compliance ($\mathcal{S}_\text{spc}$), the degree to which consecutive waypoint spacings satisfy the upper bound $d_\text{max}$:
\begin{equation}
\mathcal{S}_\text{spc} = 1 - d_\text{max}^{-1}\sum_{t=0}^{T-1}\max\left(\|\mathbf{p}_{t+1} - \mathbf{p}_t\|_2 - d_\text{max},\ 0\right).
\end{equation}
Metrics~(4),~(5), and~(6) respectively quantify the satisfaction of the three physical constraints embedded in the projection layer: collision avoidance, kinematic feasibility, and waypoint spacing.

\textbf{Dynamic Tracking Metrics} assess physical feasibility through tracking experiments in CARLA \citep{dosovitskiy2017carla} and real car scenarios. They include: (1) RMSE Cross-Track Error (CTE); (2) Average Heading Error (AHE); and (3) Steering Smoothness ($\mathcal{J}_\text{smooth}$):
\begin{equation}
\mathcal{J}_\text{smooth} = T^{-1}\sum_{t=0}^{T-1}|\delta_{t+1} - \delta_t|,
\end{equation}
where $\delta_t$ denotes the steering angle at time step $t$.

\begin{table}[htbp]
\centering
  \caption{Network Configurations and Hyperparameters}\label{tab:hyperparameters}
  \begin{tabular}{@{}ll@{}}
    \toprule
    Parameter & Value \\ \midrule
    Hidden Dimensions & [128, 256, 512, 512]  \\
    Activation Function & ReLU  \\
    Stage 1 training & 300 epochs, $lr=10^{-4}$, Batch Size 512  \\
    Stage 2 training & 100 epochs, $lr=3\times10^{-5}$, Batch Size 256  \\
    Tolerance ($\epsilon$) & $10^{-3}$  \\
    Max Iterations ($I_{\text{max}}$) & 50  \\
    Path Weight ($w_p$) & 5.0  \\
    Slack Weight ($w_s$) & 1.0  \\ \bottomrule
  \end{tabular}
\end{table}
\subsection{Implementation Details}\label{sec:implementation}
The specific configurations for the policy network, training curriculum, and inference solver are summarized in Table~\ref{tab:hyperparameters}. The policy network takes a 66-dimensional vector as input, comprising the target coordinates and the vertices of the obstacles.
All learning-based methods are trained for a total of 400 epochs on an Intel i9-14900KF CPU paired with an NVIDIA RTX 2080 Ti GPU. For inference, learning-based methods are evaluated on the NVIDIA RTX 2080 Ti GPU, while non-learning methods run on the Intel i9-14900KF CPU to reflect their respective deployment conditions.

\subsection{Comparative Analysis of Planning Performance}

\begin{figure*}[htbp]
  \centering
  \includegraphics[width=\textwidth]{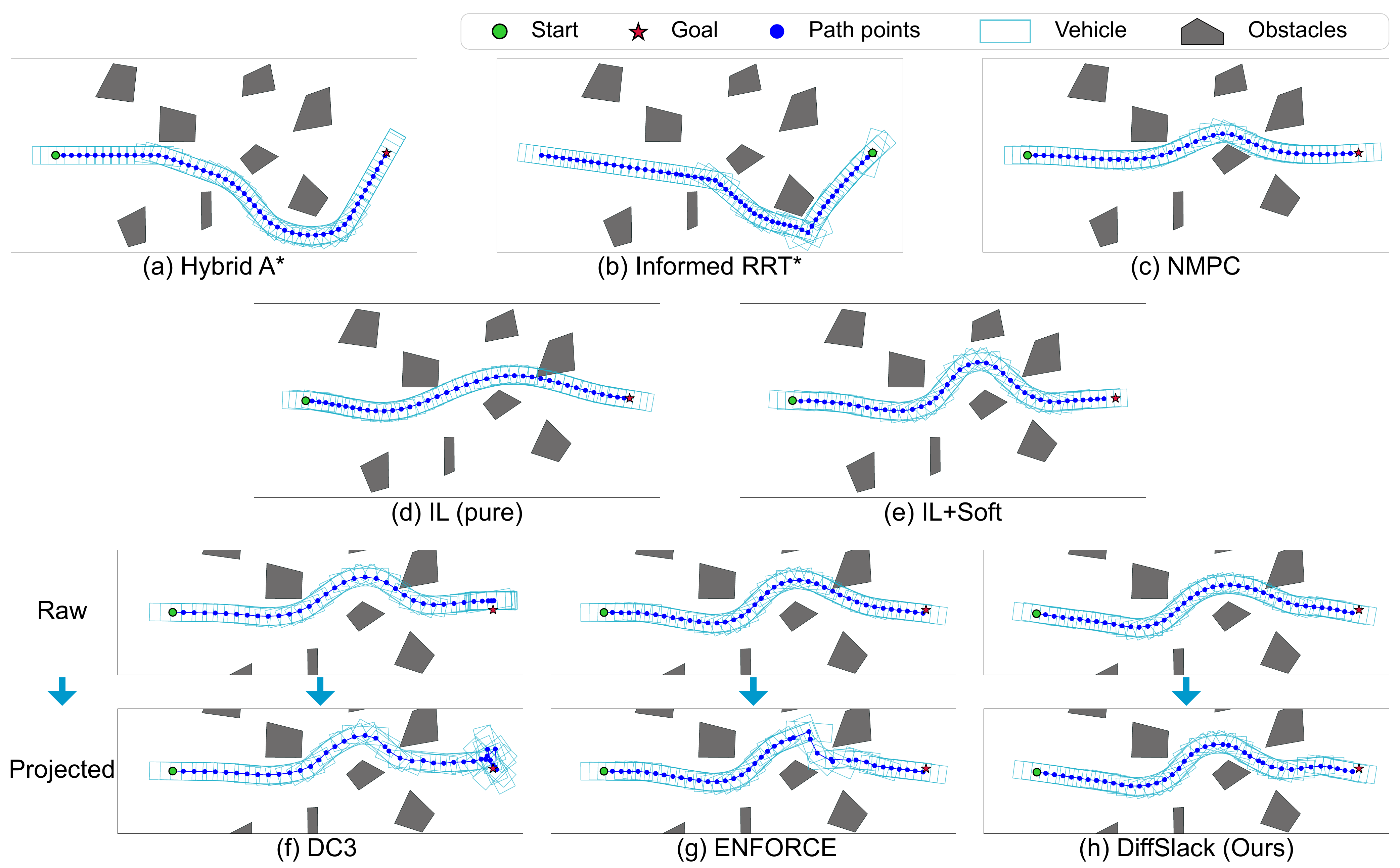}
  \caption{Qualitative comparison on a representative planning scenario. The upper panels show final trajectories produced by representative classical planners and learning-based baselines. The lower panels compare projection-based neural methods, where each column shows the raw prediction before projection and the projected trajectory after correction. DiffSlack yields a smoother projected trajectory in the obstacle-dense region, consistent with its higher planning success and geometric constraint satisfaction.}\label{fig:qualitative}
\end{figure*}

\begin{table*}[htbp]
  \centering
  \caption{Comparative Analysis of Path Planning Performance. $\uparrow$ ($\downarrow$) indicates that higher (lower) values are better. \textbf{Bold} values indicate the best results in each category.}\label{tab:comparison_static}
    \begin{tabular*}{\textwidth}{@{\extracolsep{\fill}}llcccccc}
      \toprule
      Category & Method & CT (s) $\downarrow$ & APL (m) $\downarrow$ & AGD (m) $\downarrow$ & SR (\%) $\uparrow$ & $\mathcal{S}_{\text{kin}}$(\%) $\uparrow$ & $\mathcal{S}_\text{spc}$ (\%) $\uparrow$ \\
      \midrule
      \multirow{3}{*}{Non-learning} 
        & Hybrid A*     & 12.3392   & 35.57 & 0.0 & \textbf{98.80} & \textbf{100.0} & 100.0 \\
        & Informed RRT* & 9.4949    & 34.59 & 0.0 & 86.26 & 96.50 & 100.0 \\
        & NMPC & \textbf{1.5308} & \textbf{34.47} & 0.0 & 82.35 & 100.0 & 100.0\\
      \midrule
      \multirow{5}{*}{Learning-based} 
        & IL (Pure) & \textbf{0.0002} & \textbf{33.16} & 0.7029 & 36.32 & \textbf{99.59} & 99.23 \\
        & IL+Soft & \textbf{0.0002} & 34.04 & 1.1161 & 72.66 & 98.60 & 99.85 \\
        & DC3($t=50$) & 0.1012 & 33.85 & 0.6927 & 82.84 & 97.06 & 92.61 \\
        & DC3($t=100$) & 0.2025 & 33.80 & 0.7718 & 86.44& 97.15 & 94.57 \\
        & ENFORCE($t=50$) & 0.1101 & 33.31 & 0.9698 & 84.05 & 97.28 & 99.29 \\
        & DiffSlack (Ours) & 0.0983 & 33.59 & \textbf{0.3474} & \textbf{93.80} & 99.48 & \textbf{99.89} \\
      \bottomrule
    \end{tabular*}
\end{table*}

Table~\ref{tab:comparison_static} and Fig.~\ref{fig:qualitative} report the quantitative and qualitative planning results on 20,000 test scenarios. Non-learning planners achieve strong constraint satisfaction because feasibility is explicitly considered during search or optimization. However, their computation time is much higher than that of learning-based methods. In contrast, pure imitation learning is extremely fast but has a low success rate and large goal deviation, showing that feed-forward prediction alone is insufficient for reliable constrained planning in this setting. Adding a soft constraint penalty improves feasibility, but the success rate remains clearly lower than that of hard-constrained neural methods.

Among projection-based learning methods, DC3 improves feasibility through iterative gradient correction, but its spacing satisfaction remains relatively low, especially under the smaller correction budget. Increasing its correction budget from \(t=50\) to \(t=100\) improves the success rate and spacing satisfaction, but almost doubles the computation time. ENFORCE provides a strong Gauss-Newton projection baseline and achieves computation time comparable to DiffSlack. Nevertheless, it still has lower success rate, lower curvature satisfaction, and lower spacing satisfaction than DiffSlack. These results suggest that DiffSlack achieves more reliable geometric correction under a comparable computation budget, which is consistent with the intended role of the learned slack representation.

Figure~\ref{fig:qualitative} further illustrates this behavior. The upper panels compare representative final trajectories, while the lower panels show raw and projected outputs of DC3, ENFORCE, and DiffSlack in the same scenario. DC3 starts from a different raw prediction and shows a sharper projected correction near the goal-side region. In contrast, ENFORCE and DiffSlack start from more comparable raw predictions, but their projected trajectories differ noticeably near the obstacle-dense region. ENFORCE exhibits a sharper local correction, whereas DiffSlack produces a smoother projected trajectory with more stable obstacle-avoidance behavior. This qualitative result is consistent with the higher planning success and geometric constraint satisfaction reported in Table~\ref{tab:comparison_static}.

\subsection{Dynamic Validation and Path Tracking}

\begin{figure*}[htbp]
  \centering
  \includegraphics[width=0.9\textwidth]{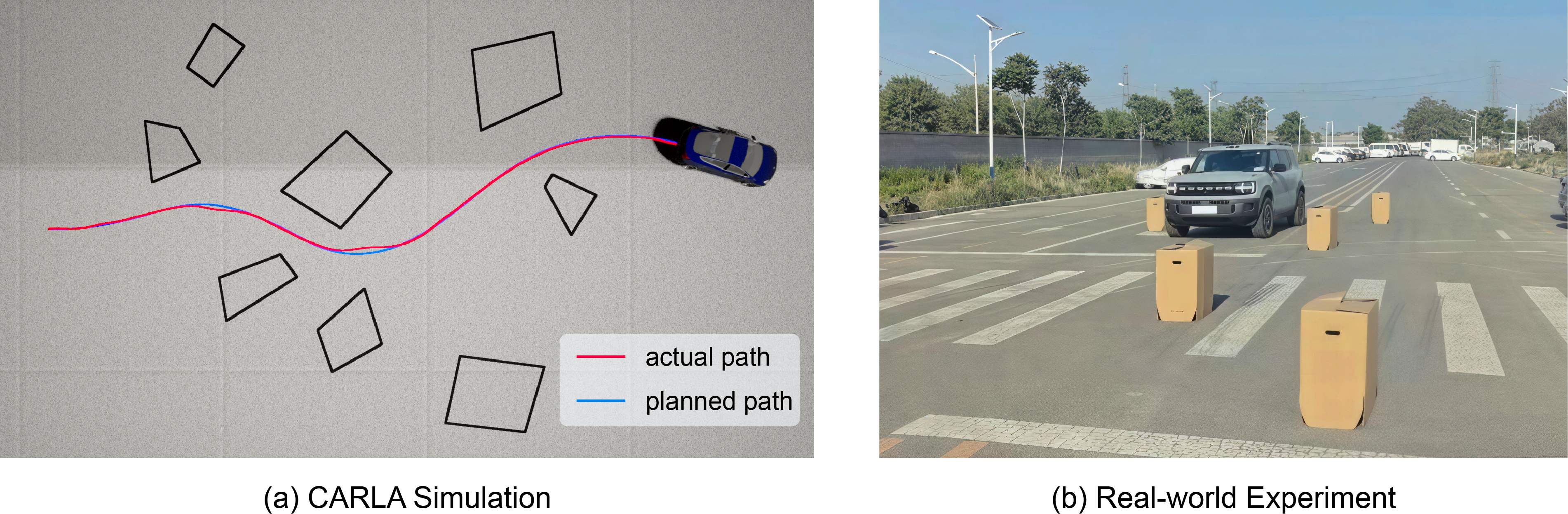}
  \caption{Dynamic validation environments. (a) CARLA simulation. (b) Real-world experiment with physical obstacles.}\label{fig:experiment}
\end{figure*}

\begin{figure*}
  \centering
  \includegraphics[width=\textwidth]{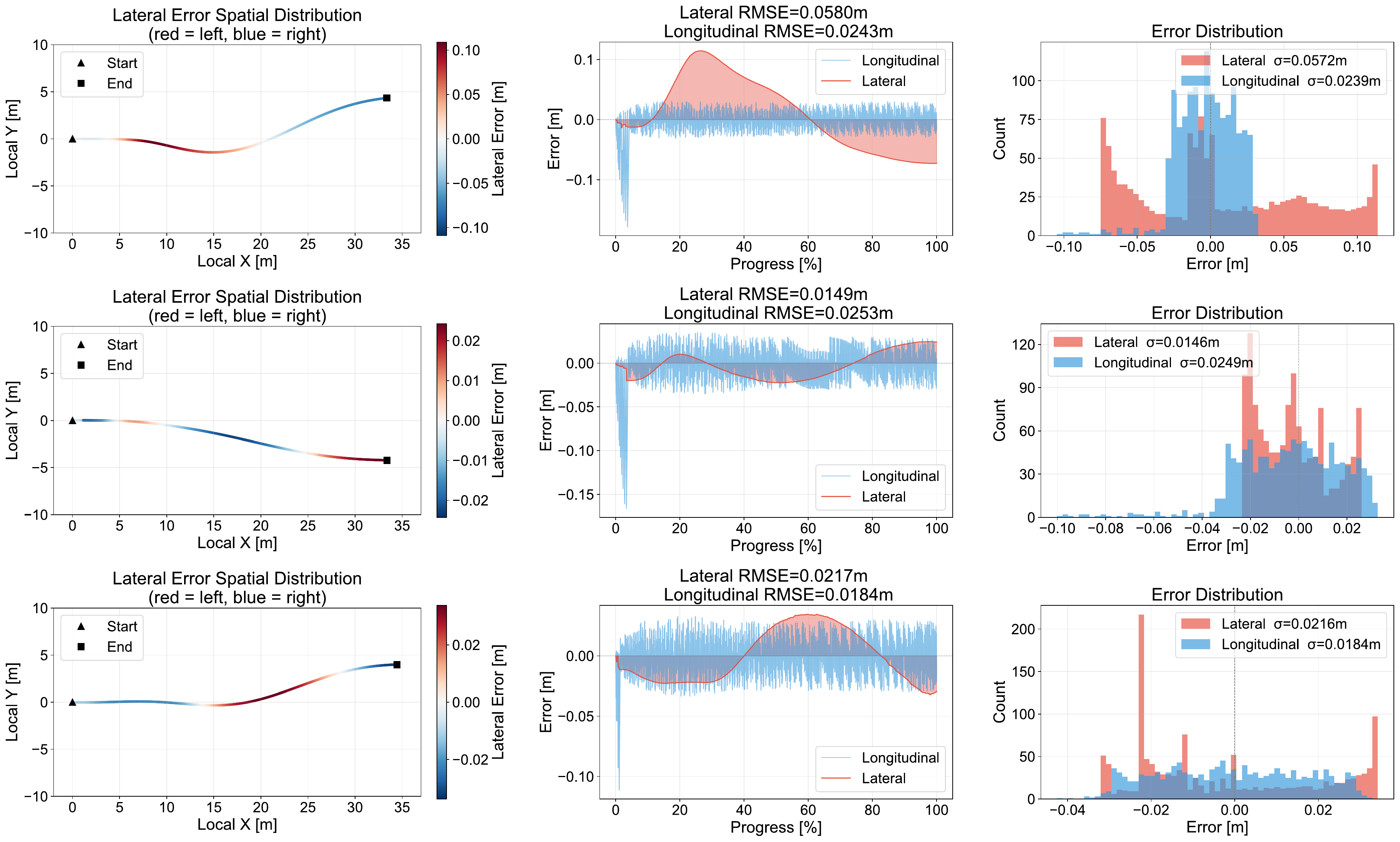}
  \caption{Tracking error analysis for three representative real-world runs. Each row shows the lateral error spatial distribution along the driven path (left), lateral and longitudinal error profiles over path progress (middle), and the corresponding error histograms (right).}\label{fig:track}
\end{figure*}

\begin{table*}[htbp]
  \centering
  \caption{Comparative Analysis of Path Tracking Performance. $\uparrow$ ($\downarrow$) indicates that higher (lower) values are better. \textbf{Bold} values indicate the best results for each metric. }\label{tab:track}
    \begin{tabular*}{0.7\textwidth}{@{\extracolsep{\fill}}lcccc}
      \toprule
      Method & RMSE CTE (\unit{m}) $\downarrow$ & Max CTE (\unit{m}) $\downarrow$ & AHE (\unit{\degree}) $\downarrow$ & $\mathcal{J}_{\text{smooth}}$ $\downarrow$ \\
      \midrule
        IL+Soft       & 0.1073        & 0.2668         & 9.681 & 0.0534 \\
        DC3($t=100$)    & 0.1137        & 0.4850         & 9.237          & 0.0550\\
        ENFORCE($t=50$) & 0.1136        & 0.2766         & 10.15         & 0.0554\\
        DiffSlack (Ours)   & \textbf{0.0802}  & \textbf{0.1685}   & \textbf{8.718}  & \textbf{0.0519} \\ 
      \bottomrule
    \end{tabular*}
\end{table*}

To evaluate whether the planned paths can be reliably executed in practice, we conduct closed-loop tracking experiments in both CARLA simulation and a real-world outdoor environment, as shown in Fig.~\ref{fig:experiment}. The CARLA experiment is used for comparative evaluation across learning-based planners. For each learning-based method, we forward 2,000 collision-free planned paths to the same tracking controller. Implementation details of the CARLA PD controller are provided in Appendix~\ref{app:carla}.

Table~\ref{tab:track} reports the CARLA tracking results. DiffSlack achieves the best performance across all four metrics, showing that its planning advantage translates into better closed-loop executability under the same tracking controller. Although DC3 and ENFORCE improve planning feasibility over IL+Soft in Table~\ref{tab:comparison_static}, their tracking performance is slightly worse. This trend is consistent with their lower curvature or spacing satisfaction in the static evaluation. Their projection corrections can recover obstacle avoidance but may introduce sharper local geometric changes, which makes the resulting paths harder to track. In contrast, DiffSlack preserves feasibility while producing smoother and more uniformly spaced trajectories, leading to the best tracking accuracy and control smoothness.

To further assess deployment feasibility, we validate DiffSlack-planned paths on a physical SUV in an outdoor test site with physical obstacles. We execute eight planned paths under real vehicle dynamics. Fig.~\ref{fig:track} presents detailed error analysis for three representative runs. The lateral errors remain small and are distributed around zero, while the longitudinal errors stay within a narrow range along the path. Across all eight real-world runs, the mean lateral tracking error is 0.046~m and the mean maximum lateral error is 0.152~m, demonstrating that DiffSlack-planned paths remain executable beyond simulation.

\subsection{Sensitivity Analysis and Ablation Study}

\begin{figure*}
  \centering
  \includegraphics[width=0.9\textwidth]{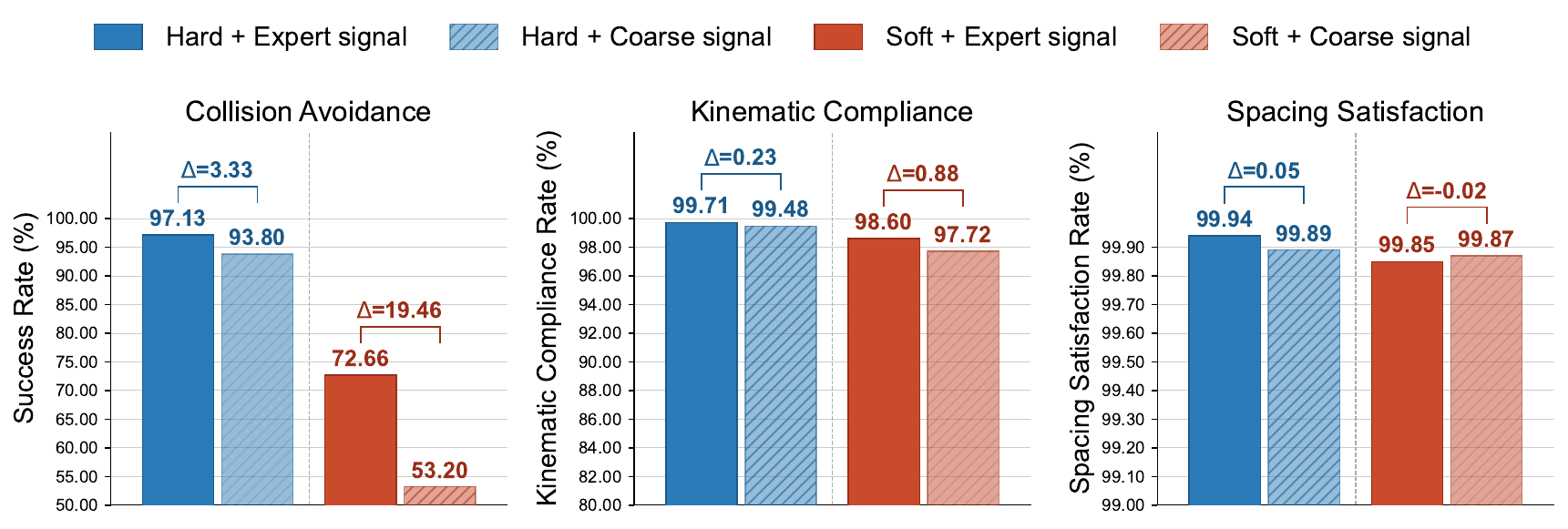}
  \caption{Ablation study on supervision signal quality under a $2 \times 2$ design varying constraint enforcement (Hard vs. Soft) and supervision signal (Expert vs. Coarse). Each group of bars reports the gap $\Delta$ between expert and coarse conditions. Hard constraints compress the supervision quality gap by approximately 83\% on the success rate metric, while kinematic compliance and spacing satisfaction remain largely insensitive to both factors.}\label{fig:ablation_signal}
\end{figure*}

\subsubsection{Supervision Signal Quality}
\label{sec:ablation1}

To examine whether hard projection reduces the dependence on supervision quality, we conduct a \(2\times2\) ablation study by varying the constraint enforcement mechanism (Hard vs. Soft) and the supervision signal quality (Expert demonstrations from NMPC vs. Coarse G-APF guidance). Here, Hard~+~Coarse corresponds to the proposed DiffSlack, and Soft~+~Expert corresponds to IL+Soft in Table~\ref{tab:comparison_static}. The results are shown in Fig.~\ref{fig:ablation_signal}.

The success-rate results show two clear trends. First, hard projection improves collision avoidance under both expert and coarse supervision, indicating that structural constraint enforcement provides a more reliable feasibility mechanism than penalty-based regularization alone. Second, hard projection substantially reduces the performance gap caused by supervision quality. As shown by the annotated gaps in Fig.~\ref{fig:ablation_signal}, hard projection reduces the expert-coarse supervision gap in SR by approximately 83\% compared with soft constraints. This result suggests that the projection layer acts as an implicit corrective mechanism when the external supervision signal is coarse.

The kinematic compliance and spacing satisfaction metrics vary much less across the four settings. These two constraints are already close to saturation for both hard and soft methods, leaving limited room for further improvement. In contrast, collision avoidance involves many obstacle-dependent nonlinear inequalities and is more sensitive to the quality of the learned trajectory. The larger gap reduction on SR therefore indicates that the benefit of hard projection is most pronounced when the constraint set is difficult and supervision is imperfect.

\subsubsection{Necessity of Two-Stage Curriculum Training}

\begin{table}[htbp]
  \centering
  \caption{Ablation study on two-stage curriculum training. SR, APL, and 
  AGD are reported for DiffSlack and DC3 with and without Stage 1 
  pretraining. APL and AGD are computed over collision-free paths 
  only.}\label{tab:ablation_s1}
  \begin{tabular}{@{}llccc@{}}
    \toprule
    Method & Training & SR (\%) $\uparrow$ & APL (m) $\downarrow$ & AGD (m) $\downarrow$ \\
    \midrule
    \multirow{2}{*}{DiffSlack} 
      & w/o Stage 1 & 89.73 & 36.78 & 2.6384 \\
      & w/ Stage 1  & 93.80 & 33.59 & 0.3474 \\
    \midrule
    \multirow{2}{*}{DC3($t=50$)} 
      & w/o Stage 1 & 75.41 & 34.84 & 0.6794 \\
      & w/ Stage 1  & 82.84 & 33.85 & 0.6927 \\
    \bottomrule
  \end{tabular}
\end{table}

\begin{figure*}[htbp]
  \centering
  \includegraphics[width=0.9\textwidth]{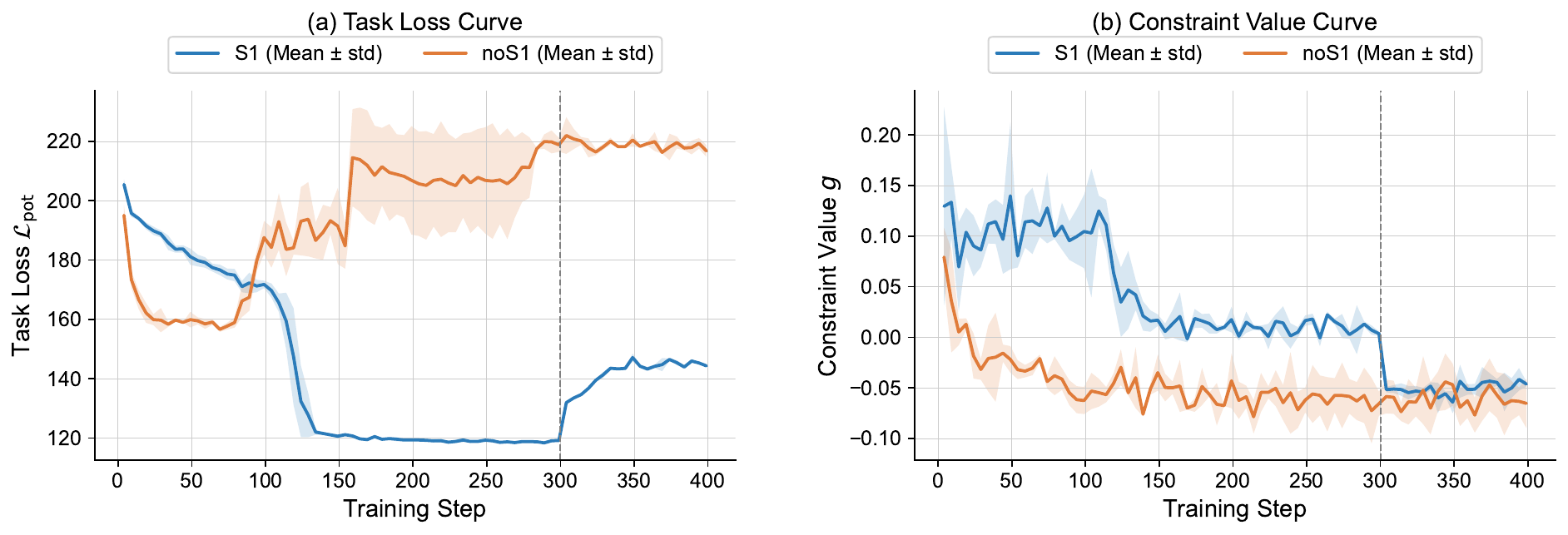}
  \caption{Training dynamics with and without Stage 1 (S1) of the curriculum training strategy, averaged over three random seeds with shaded regions indicating standard deviation. (a) Task loss $\mathcal{L}_\text{pot}$ over training steps. (b) Mean constraint value $g$ over training steps. The vertical dashed line at step 300 marks the transition from Stage 1 to Stage 2 for the S1 variant.}\label{fig:ablation_s1}
\end{figure*}

Table~\ref{tab:ablation_s1} and Figure~\ref{fig:ablation_s1} present the results of removing Stage 1 pretraining from both DiffSlack and DC3($t=50$). In both cases, Stage 1 pretraining consistently improves performance, confirming that soft constraint warm-up is beneficial regardless of the underlying projection mechanism.

The two methods show different degradation patterns when Stage 1 is removed. DiffSlack uses a damped Gauss-Newton projection, which can recover feasibility efficiently but is sensitive to initialization. Without Stage 1, the projected feasible points may correspond to poor task solutions, leading to a moderate drop in SR but a substantial deterioration in APL and AGD. This is consistent with the persistently higher task loss of the no-Stage-1 variant in Fig.~\ref{fig:ablation_s1} (a). DC3 uses gradient-based correction, which changes trajectories more gradually but converges more slowly. Without Stage 1, the global path structure is less severely affected, but the fixed correction budget is more likely to be exhausted before feasibility is recovered, causing a larger drop in SR with smaller changes in APL and AGD. Thus, Stage 1 mainly improves task-relevant projection initialization for DiffSlack and finite-step feasibility recovery for DC3.

\subsubsection{Impact of $I_{\max}$}
\begin{figure*}[htbp]
  \centering
  \includegraphics[width=0.9\textwidth]{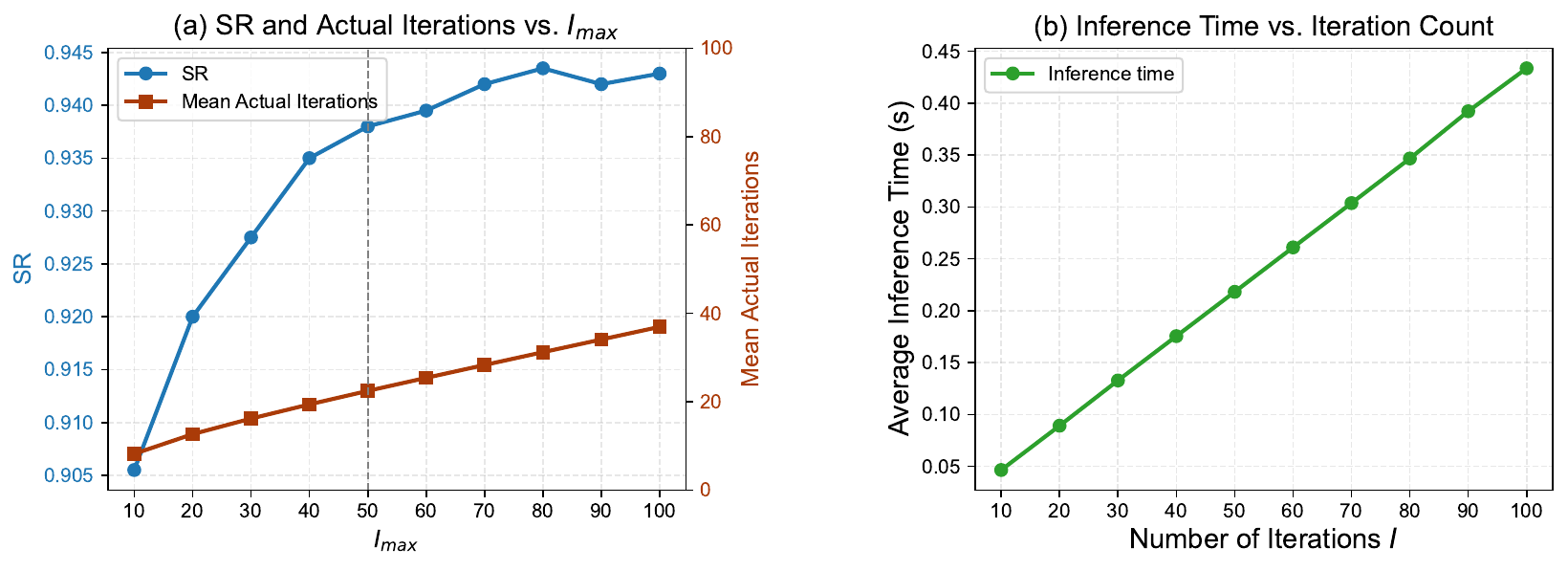}
  \caption{Ablation study on the maximum iteration count $I_{\max}$. (a) SR and mean actual iteration count as a function of $I_{\max}$. The vertical dashed line marks the selected value $I_{\max} = 50$. (b) Average inference time as a function of fixed iteration count $I$, showing a linear relationship.}\label{fig:ablation_imax}
\end{figure*}

Figure~\ref{fig:ablation_imax} examines how the maximum iteration count $I_{\max}$ affects planning performance and computational cost. As shown in Figure~\ref{fig:ablation_imax}(a), SR increases rapidly as $I_{\max}$ grows from 10 to 50, then plateaus with diminishing returns beyond this point. Meanwhile, the mean actual iteration count remains substantially below $I_{\max}$ across all settings, indicating that the early stopping criterion is triggered in most cases before the iteration budget is exhausted. Together, these two observations suggest that $I_{\max} = 50$ strikes a favorable balance between performance and efficiency, and is therefore adopted as the default setting.

Figure~\ref{fig:ablation_imax}(b) shows that average inference time increases linearly with the number of iterations, which is consistent with the fixed per-step computational cost of the Gauss-Newton update. This confirms that $I_{\max}$ serves as a direct and predictable control over inference latency.

\begin{figure*}[htbp]
  \centering
  \includegraphics[width=0.7\textwidth]{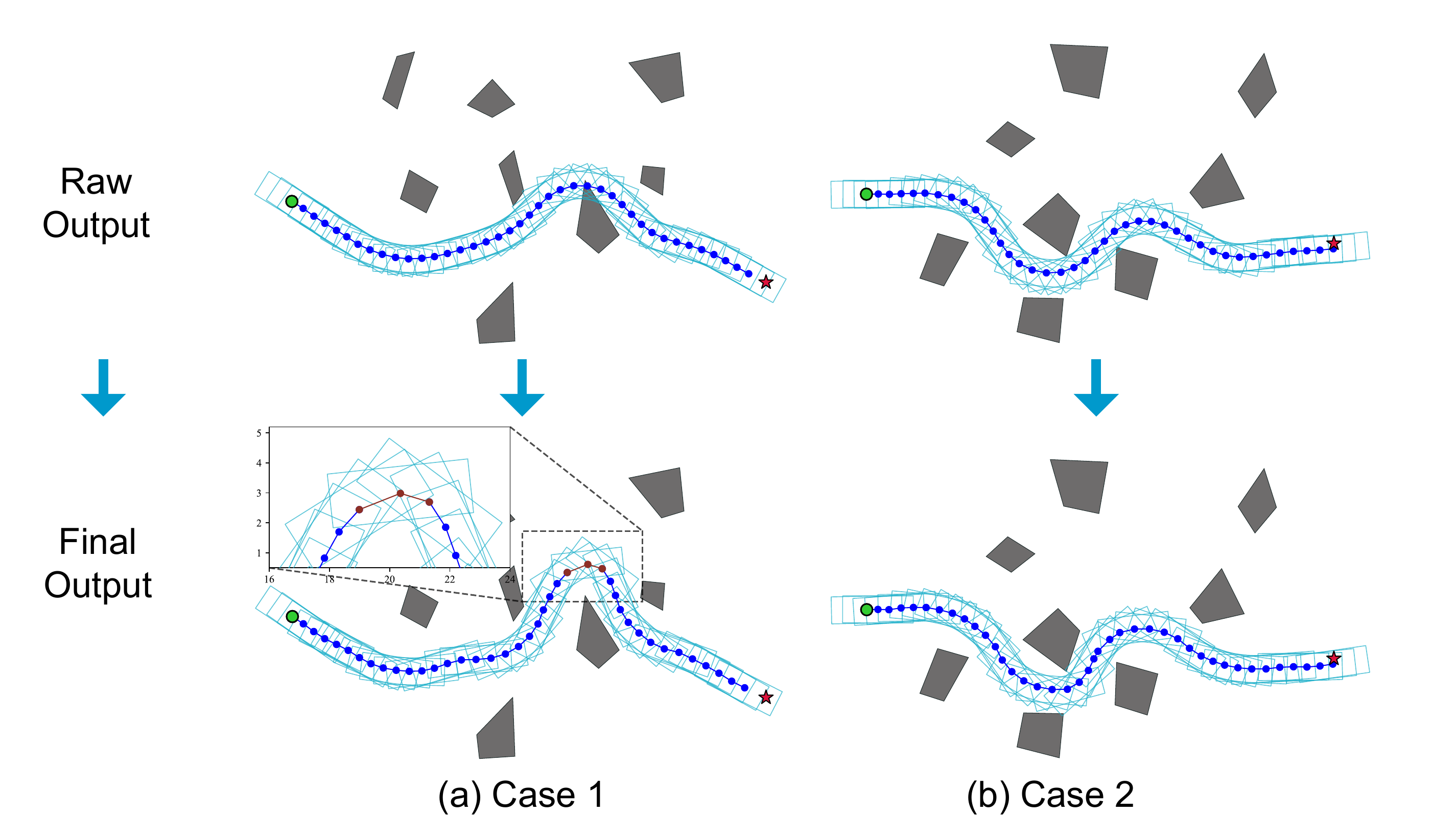}
  \caption{Representative failure cases of DiffSlack. Each column shows the raw output (top) and the final output after Gauss-Newton projection (bottom). (a) Case 1: Large collision correction displaces local waypoints, introducing curvature and spacing violations (red) that cannot be resolved within the iteration budget. (b) Case 2: The raw output selects a route through a dense obstacle cluster. No feasible path exists within this homotopy class and the projection terminates with collision constraints still violated.}\label{fig:failure_cases}
\end{figure*}
\subsection{Failure Case Analysis}

Although DiffSlack achieves a high success rate overall, failure cases still occur under challenging scenarios. Both failure cases in Figure~\ref{fig:failure_cases} stem from network generalization error, though the causes differ.

In Case 1, the raw output follows a correct global route but partially penetrates an obstacle locally. Eliminating the collision violation requires a large lateral displacement of the affected waypoints, which inevitably perturbs the local heading angles and inter-waypoint distances of the surrounding path. This geometric coupling introduces curvature and spacing violations as a byproduct that the remaining iterations cannot fully resolve, leaving the solver at a partially feasible point at $I_{\max}$.

In Case 2, the raw output selects a route passing through a dense obstacle cluster where no feasible path exists. Since the chosen homotopy class contains no feasible solution, local minimum-norm corrections cannot alter the global path topology, and the projection terminates with the collision constraint still violated.

Case 1 is in principle recoverable, since a sufficiently accurate initialization would allow the projection to converge along the correct global route. Case 2 is not recoverable by any local projection method, as no feasible path exists within the homotopy class selected by the network. Addressing Case 1 therefore calls for tighter initialization via stronger Stage 1 training. Addressing Case 2 requires improving the network's global routing decisions, for instance through augmented training coverage of obstacle-dense configurations.

\section{CONCLUSION}

This paper presents DiffSlack, a differentiable projection layer for enforcing general nonlinear inequality constraints in neural networks. By reformulating inequality constraints as equality constraints via learnable slack variables, the network warm-starts the Gauss-Newton projection solver and reduces the correction burden at inference time. A two-stage curriculum training strategy ensures stable optimization and promotes stricter constraint satisfaction. Experiments on a vehicle path planning task with 200 inequality constraints demonstrate that DiffSlack achieves a higher success rate than existing learning-based methods under a comparable computational budget. Furthermore, the hard constraint projection layer acts as an implicit supervision mechanism, significantly reducing sensitivity to supervision signal quality.

Several directions remain for future work. First, relaxing the initialization requirement of the projection solver to handle cases where the raw output lies outside the convergence basin remains an open problem. Second, the proposed framework is general and can potentially be applied to other engineering domains such as robot motion planning and power system optimization.

\appendix
\renewcommand{\theequation}{\thesection.\arabic{equation}}
\renewcommand{\thetable}{\thesection.\arabic{table}}
\renewcommand{\thefigure}{\thesection.\arabic{figure}}

\makeatletter
\@addtoreset{equation}{section}
\@addtoreset{table}{section}
\@addtoreset{figure}{section}
\makeatother
\section{Implementation details of baseline methods}\label{app:baselines}

\subsection{Hybrid A*}
The Hybrid A* algorithm is implemented based on the PythonRobotics library\citep{sakai2018pythonrobotics}.

The Hybrid A* planner searches for kinematically feasible trajectories in the discretized configuration space \((x,y,\theta)\), using motion primitives generated from a discrete steering set. To improve efficiency, Reeds-Shepp analytic expansion is attempted near the goal region or at a fixed iteration frequency. The search cost \(f(n)=g(n)+h(n)\) combines accumulated motion cost with a heuristic obtained from a 2D dynamic-programming distance map. The motion cost penalizes path length, reverse motion, gear changes, steering effort, and steering-rate variation, while polygonal obstacles are inflated for safety and checked for collision along the generated primitives. The hyperparameters are listed in Table~\ref{tab:astar_params}.

\begin{table}[htbp]
    \centering
    \caption{Hyperparameters for the Hybrid A* Planner}
    \label{tab:astar_params}
    \begin{tabular}{lc @{\hskip 1cm} lc}
    \toprule
    \textbf{Parameter} & \textbf{Value} & \textbf{Parameter} & \textbf{Value} \\
    \midrule
    XY Grid Resolution      & 0.5\,m       & Switch Back Cost      & 100.0 \\
    Yaw Grid Resolution     & 15.0$^\circ$ & Backward Cost         & 5.0 \\
    Motion Resolution       & 0.1\,m       & Steer Change Cost     & 5.0 \\
    Steering Discrete Count & 5            & Steer Cost            & 1.0 \\
    Analytic Expansion Dist & 5.0\,m       & Heuristic Cost Weight & 5.0 \\
    Analytic Expansion Freq & 10           &                       & \\
    \bottomrule
    \end{tabular}
\end{table}

\subsection{Informed RRT*}
The Informed RRT* algorithm is also implemented based on the PythonRobotics library\citep{sakai2018pythonrobotics}. The specific hyperparameters used in our implementation are detailed in Table~\ref{tab:rrt_params}.

\begin{table}[htbp]
  \centering
  \caption{Hyperparameters for Informed RRT* Planner}
  \label{tab:rrt_params}
  \begin{tabular}{lc @{\hskip 1cm} lc}
    \toprule
    \textbf{Parameter} & \textbf{Value} & \textbf{Parameter} & \textbf{Value} \\
    \midrule
    Expansion Step Size & 0.5\,m   & Obstacle Inflation           & 1.2\,m \\
    Goal Sampling Rate  & 10\%     & Rewiring Constant ($\gamma$) & 50.0 \\
    Max Iterations      & 1000     & Path Interp. Res.            & 1.0\,m \\
    \bottomrule
  \end{tabular}
\end{table}

\subsection{NMPC}
The NMPC problem is implemented using the CasADi framework and solved via the IPOPT interior-point solver. The optimization horizon is discretized into $T=40$ steps with a sampling time of $\Delta t = \qty{0.5}{s}$. The objective function $J$ is formulated to minimize the deviation from the target state while penalizing control efforts and enforcing trajectory smoothness:
\begin{equation}
    J = w_{g} \|p_T - p_{ref}\|^2 + \sum_{k=0}^{T-1} \left( w_{u} (v_k^2 + \delta_k^2) + w_{s} ((\Delta v_k)^2 + (\Delta \delta_k)^2) \right)
\end{equation}
where $p_T$ is the terminal position, $u_k = [v_k, \delta_k]^T$ are the control inputs (velocity and steering angle), and $\Delta u_k$ represents the rate of change in controls. The optimization is subject to the kinematic bicycle model constraints, actuator limits, and the differentiable collision avoidance constraints described in the main text. To improve convergence, the solver is warm-started with a linear interpolation between the start and goal configurations.

\begin{table}[htbp]
  \centering
  \caption{Hyperparameters for NMPC Planner}
  \label{tab:nmpc_params}
  \begin{tabular}{lc @{\hskip 1cm} lc}
    \toprule
    \textbf{Parameter} & \textbf{Value} & \textbf{Parameter} & \textbf{Value} \\
    \midrule
    Prediction Horizon ($T$) & 40         & Goal Weight ($w_g$)       & 50.0 \\
    Time Step ($dt$)         & 0.5\,s     & Smoothness Weight ($w_s$) & 100.0 \\
    Max Velocity             & 2.0\,m/s   & Input Weight ($w_u$)      & 1.0 \\
    Safety Margin            & 0.25\,m    & Max Acceleration          & 2.0\,m/s$^2$ \\
    \bottomrule
  \end{tabular}
\end{table}

\section{Implementation details of CARLA simulation}\label{app:carla}
\begin{table}[htbp]
    \centering
    \caption{Parameters for CARLA Simulation and Control}
    \label{tab:sim_params}
    \renewcommand{\arraystretch}{1.2}
    \begin{tabular}{lc @{\hskip 1cm} lc}
    \toprule
    \textbf{Parameter} & \textbf{Value} & \textbf{Parameter} & \textbf{Value} \\
    \midrule
    Time Step ($\Delta t$) & 0.02\,s        & Lateral $K_P$ & 3.0 \\
    Max Speed              & 7.2\,km/h     & Lateral $K_D$ & 1.0 \\
    Min Speed              & 3.6\,km/h      & Long. $K_P$   & 0.8 \\
    Max Lat. Accel.        & 2.0\,m/s$^2$   & Long. $K_D$   & 0.0 \\
    \bottomrule
    \end{tabular}
\end{table}
The experiments are conducted in the CARLA simulator using a flat asphalt environment. The simulation operates in synchronous mode with a fixed time step of $\Delta t = \qty{0.02}{s}$ to ensure deterministic physics updates. The ego vehicle is based on the \texttt{vehicle.tesla.model3} blueprint, with the front-wheel maximum steering angle customized to $40^{\circ}$.

Trajectory tracking is executed by a coupled feedforward-feedback controller. Longitudinal control utilizes a PD regulator to track a dynamic reference speed $v_\text{cmd}$ derived from the local path curvature $\kappa$. This target speed is computed as 
\begin{equation}
v_\text{cmd} = \operatorname{clip}\left(\sqrt{a_{\text{lat},\text{max}}/{(|\kappa| + \epsilon)}}, v_{\text{min}}, v_{\text{max}}\right).
\end{equation}
For lateral control, the steering command integrates a PD feedback term with a geometric feedforward term $\delta_{ff} = \arctan(L \cdot \kappa)$, calculated using the kinematic bicycle model where $L$ is the wheelbase.

\section*{Declaration of Generative AI and AI-assisted technologies in the writing process}

During the preparation of this work the authors used Claude in order to refine the language and grammar of the manuscript. After using this tool, the authors reviewed and edited the content as needed and take full responsibility for the content of the publication.

\section*{Acknowledgments}
This work was supported in part by National Natural Science Foundation of China under Grant 52275564, U24A20109, and by Beijing Natural Science Foundation under Grant QY24251.

The authors would like to thank Chengyun Ju, Donggang Sang, and Bingtao Zhang for their valuable discussions and technical support during the course of this work and would also like to thank Jing Ma for the support during the vehicle experiment phase of this work.

\bibliographystyle{cas-model2-names}

\bibliography{references}



\end{document}